\newtheorem{theorem}{Theorem}[section]
\newtheorem{lemma}[theorem]{Lemma}
\newtheorem{definition}[theorem]{Definition}
\newcommand{\R}[0] {\mathbb R}
\author{%
  Hanna Mazzawi$^*$ \\
  Google Research \\
  \texttt{mazzawi@google.com} \\
  \And
  Benoit Dherin$^*$ \\
  Google Research\\
  \texttt{dherin@google.com} \\
  \And
  Michael Munn$^*$ \\
  Google Research \\
  \texttt{munn@google.com} \\
  \AND
  Adrian Goldwaser \\
  Google Research \& Cambridge University \\
  \texttt{goldwaser@google.com} \\  
  \AND
  Michael Wunder \\
  Google Research \\
  \texttt{mwunder@google.com} \\
  \And
  Javier Gonzalvo \\
  Google Research \\
  \texttt{xavigonzalvo@google.com} \\
}
\title{Transmuting prompts into weights}
\begin{document}

\maketitle

\def\thefootnote{*}\footnotetext{These authors contributed equally to this work}

\begin{abstract}
A growing body of research has demonstrated that the behavior of large language models can be effectively controlled at inference time by directly modifying their internal states, either through vector additions to their activations or through updates to their weight matrices. These techniques, while powerful, are often guided by empirical heuristics, such as deriving ``steering vectors'' from the average activations of contrastive prompts. Building on the foundational work of Dherin et al. (2025), who discovered that a prompt's influence mathematically maps to \emph{token-dependent} implicit weight updates and introduced the initial concept of a static thought patch for prompt compression, we elevate this framework into a robust algorithm for direct model editing. We derive a principled method for condensing this transient information into \emph{token-independent} thought vectors and thought matrices. These constructs provide a theoretical explanation for existing vector-and-matrix-based model editing techniques and offer a direct, computationally-grounded method for transmuting textual input into reusable weight updates for complex architectures and new knowledge injection.
\end{abstract}

\section{Introduction}

Recent advancements in controlling large language models (LLMs) at inference time generally fall into two empirical families: \emph{activation steering} via ``steering vectors'' added to hidden states \citep{Subramani2022ExtractingLS, turner2025steering}, and \emph{model editing} via targeted, low-rank modifications to weight matrices \citep{meng2022locating, mitchell2022fast}. While these intervention strategies are remarkably effective, their development has been largely heuristic, lacking a unified theoretical justification rooted in the transformer architecture. 

This work provides a theoretical framework answering why interventions like averaging contrastive activations or applying low-rank matrix updates succeed. Our research builds directly on the foundational theory introduced by \citet{dherin2025learning}. In their work, they proved that for a standard transformer block, the computational effect of an input prompt can be perfectly replicated by applying specific, \emph{token-dependent} rank-one updates—termed ``minimal token-patches''—to the block's feed-forward weights. While they utilized these implicit updates primarily as an analytical tool to uncover the internal learning dynamics of transformers, they also introduced the preliminary concept of a static ``thought patch''—a least-squares aggregation of these token-specific updates—as a proof-of-concept for prompt compression on simple linear regression tasks.

In this paper, we significantly expand upon this original idea. We elevate the thought patch from a theoretical  tool for prompt compression to a practical, scalable algorithm for direct model control and permanent knowledge injection. Our primary contribution is to demonstrate how these transient patches can be formally generalized into reusable, token-independent \textbf{thought vectors} $\delta(I)$ and \textbf{thought matrices} $\Delta(I)$ for modern, non-linear architectures (such as Gemma).

This framework establishes a direct connection between transformer computation and model control. We show that the heuristic practice of averaging contrastive activations is equivalent to the least-squares approximation of the optimal thought vector, and that low-rank model editing naturally emerges as a sum of rank-one updates. We empirically validate this by transmuting instructions into weights for algorithmic tasks and new knowledge injection, achieving performance comparable to full-context prompting.

\textbf{Related Work:} Our work bridges activation steering (e.g., representation engineering \citep{zou2023representation}, task vectors \citep{hendel2023incontext}) and model editing (e.g., ROME \citep{meng2022locating}, PME \citep{ruzzetti2025private}). While recent literature suggests a ``parametric continuum'' where information transitions from volatile prompts to rigid weights, previous methods rely on heuristics or architectural modifications \citep{chen2024exact}. We provide a first-principles strategy to condense generic prompt information into reusable weight updates for standard transformers adapting a theoretical strategy proposed in \cite{dherin2025learning} to modern transformers using techniques from \cite{goldwaser2025equivalence}. For an extended review of related literature, please refer to Appendix \ref{appendix:original_related_work}.

\section{Transmuting Prompts to Thought Patches}

\citet{dherin2025learning} proved that for a prompt $C = [I, x_1, \dots, x_n]$, the computational effect of the context chunk $I$ can be perfectly replicated on a shortened prompt $C \backslash I = [x_1, \dots, x_n]$ by applying \emph{token-dependent} updates to the transformer's last-layer bias and first-layer weight matrix:
\begin{eqnarray}
b_x(I)  & = &  \tilde b + \delta_x(I) \\
W_x(I) & = &  W + \Delta_x(I)
\end{eqnarray}
where $\delta_x(I) = A(C,x) - A(C\backslash I, x)$ and $\Delta_x(I) = \frac{W\delta_x(I) a_x^T}{\|a_x\|^2}$ (with $a_x$ being the attention output without context $I$). 

Because these patches must be recomputed for every token, they are impractical for durable storage. We solve this by finding a single, token-independent \emph{thought patch} $(\delta(I), \Delta(I))$ that minimizes the error against the true token patches across a collection of representative examples.

\textbf{The Thought Vector:} We find the optimal thought vector by minimizing the squared error against individual token vectors $(\delta_i)$. The solution is the mean:
\begin{equation}
\delta(I) := \frac 1n \sum_{i=1}^n \delta_i
= \frac 1n \sum_{i=1}^n \big( A(C,x_i) - A(C\backslash I, x_i)\big)
\end{equation}
This justifies the heuristic of averaging contrastive activations used in standard activation engineering.

\textbf{The Thought Matrix:} We approximate the thought matrix by minimizing the error of the token patch equation across all examples. 

\begin{theorem} \label{theorem:minimization_main}
Consider $n$ vectors $a_1, \dots, a_n$ in $\mathbb R^d$ forming operators $\Delta_i = \frac { W\delta_i a_i^T}{\|a_i\|^2}$. The minimization problem $\Delta(I) =\textrm{argmin}_M \sum_{i=1}^n \|Ma_i -  W\delta_i\|^2$ has a unique solution if and only if $Z = \sum_{i=1}^n a_i a_i^T$ is invertible. The global minimum is reached by:
\begin{equation}
\Delta(I) = \left(\sum_{i=1}^n W\delta_i a_i^T\right)Z^{-1}
\end{equation}
\end{theorem}

(Proof deferred to Appendix \ref{appendix:thought_matrix}). This result proves that the optimal thought matrix naturally emerges as a sum of rank-one updates (since each $\delta_i a_i^T$ is an outer product).

\section{The Thought Patching Algorithm}

To transmute a specific instruction $I$ into model weights, we employ an iterative layer-wise alignment procedure detailed in Algorithm \ref{algorithm:gemma_thought_matrix}. The core intuition is to align the internal representations of the model processing a raw input $x$ with those of the model processing the instructed input $[I, x]$. 

For a dataset $\mathcal{D}$ of task completions compatible with the instruction $I$, we perform two forward passes per example:
\begin{enumerate}
    \item \textbf{Contextual Pass (Target):} We run the vanilla model on the full prompt $[I, x_1, \dots, x_n]$. This generates the ``ground truth'' attention outputs $A_l$ at every layer $l$.
    \item \textbf{Non-Contextual Pass (Approximation):} We run the model on the input $[x_1, \dots, x_n]$ \emph{without} the instruction $I$, using the \emph{currently patched} weights.
\end{enumerate}

For each layer $l$, we compute the discrepancy $dz_l = A_l - a_l$. The thought vector update $db_l$ is computed as the mean of these discrepancies. The thought matrix update $dW_l$ is obtained by solving the least-squares minimization problem defined in Theorem \ref{theorem:minimization_main}.  In Appendix \ref{appendix:gemma_modifications}, we detail modification to Algorithm \ref{algorithm:gemma_thought_matrix} to adapt it to Gemma 3, following insights from \cite{goldwaser2025equivalence}. 

\subsection{Optimization Strategies}
To improve the stability of the updates and allow the algorithm to generalize across diverse prompts, we introduce three standard enhancements:

\textbf{Batching:} Rather than computing updates based on a single completion, we pool the attention values over a batch of completions $\mathcal{B} = \{[I, x^{(k)}]\}_{k=1}^{B}$ to reduce the variance of the thought patch estimation. The least-squares optimization is then performed on these pooled tensors simultaneously.

\textbf{Learning Rate:} Instead of applying the patch directly, we scale the update with a tunable step size $\eta$, gradually encoding the instruction into the weights: $W^{\textrm{patched}}_{l} \leftarrow W^{\textrm{patched}}_{l} + \eta \cdot dW_l$.

\textbf{Regularization:} To prevent the thought matrix from overfitting to the specific completions in a batch, we introduce $L_2$ regularization (Ridge Regression) with strength $\rho$ to the estimation step: $dW_l = \operatorname{argmin}_M \left\| M \mathbf{a}_l - W_l \mathbf{dz}_l \right\|_F^2 + \rho \|M\|_F^2$.

\begin{algorithm}[tb]
  \caption{Thought Patching}
  \label{algorithm:gemma_thought_matrix}
{
\begin{algorithmic}
  \STATE {\bfseries Input:} Completions $\mathcal{D}$ of $I$, Weights $W, b$, Batch size $B$, Regularization $\rho$, Learning Rate $\eta$
  \STATE {\bfseries Initialize:} $W^{\textrm{patched}} \leftarrow W$, $b^{\textrm{patched}} \leftarrow b$
  
  \STATE $//$ Iterate over dataset in batches
  \FOR{batch $\mathcal{B} = \{[I, \mathbf{x}^{(k)}]\}_{k=1}^B$ {\bfseries in} $\mathcal{D}$}
    
    \STATE $//$ \textbf{Contextual Pass:} Run orig model with $I$ for $k \in \mathcal{B}$
    \STATE $//$ Collect attentions $A$ for all layers across the batch
    \STATE $\mathbf{A} = \operatorname{Stack}(\{A_l(I, \mathbf{x}^{(k)}; \textrm{original})\}_{k,l})$
    
    \FOR{layer $l$ {\bfseries in} $1, \dots, L$}
      \STATE $//$ \textbf{Non-Contextual Pass:} Run model \textit{without} $I$
      \STATE $//$ Collect attentions $\mathbf{a}_l$ using current patched weights
      \STATE $\mathbf{a}_l = \operatorname{Stack}(\{A_l(\mathbf{x}^{(k)}; {\textrm{patched}})\}_{k})$
      
      \STATE $//$ Calculate discrepancies (residuals) for the batch
      \STATE $\mathbf{dz}_l = \mathbf{A}_l - \mathbf{a}_l$
      
      \STATE $//$ Compute Thought Vector over batch
      \STATE $db_l = \operatorname{Mean}(\mathbf{dz}_l)$
      
      \STATE $//$ Compute Thought Matrix via Least Squares
      \STATE $dW_l = \operatorname{RidgeSolver}(\mathbf{a}_l\rightarrow W_l^{\textrm{patched}} \mathbf{dz}_l, \text{reg}=\rho)$
      
      \STATE $//$ Apply scaled updates to the $l^{th}$ layer
      \STATE $W^{\textrm{patched}}_{l} \leftarrow W^{\textrm{patched}}_{l} + \eta \cdot dW_l$
      \STATE $b^{\textrm{patched}}_{l} \leftarrow b^{\textrm{patched}}_{l} + \eta \cdot db_l$
    \ENDFOR
  \ENDFOR
\end{algorithmic}
}
\end{algorithm}

\section{Experimental Validation}

We validated our approach using the Gemma 3.0 1B Instruction Tuned model \citep{gemma_2025}. Specific architectural modifications necessary for Gemma's Gated GeLU and RMSNorm layers, along with exact experimental prompts, are detailed in Appendix \ref{appendix:gemma_modifications} and \ref{appendix:experiment_prompts}.

\textbf{Arithmetic and Linguistic Tasks:} Table \ref{table:small_tasks} reports performance on fundamental algorithmic tasks. We observe that performance without the instruction, but with the thought patch applied, perfectly matches that of Gemma with the full prompt. In all cases, only 10 examples were required to achieve this parity.

\begin{table}[h!]
    \centering
    \caption{Thought patch performance on arithmetic instruction tasks.}
    \label{table:small_tasks}
    {
        \begin{tabular}{l|c|c}
            \hline
            \textbf{Task ($I$)}            &   \makecell{\textbf{Original model} \\ \textbf{w/ context } \\ (Accuracy $\%$)}   & \makecell{\textbf{Patched model} \\ \textbf{w/o context} \\ (Accuracy $\%$)}     \\
            \hline
            \hline
            \makecell{Multiply  numbers \\ e.g. 3, 4, 7 $\to$ 84}            & 100 $\pm$  0.0    & $100 \pm 0.0$  \\
            \hline
            \makecell{Sum  numbers \hspace{.2in} \\ e.g. 3, 4, 7 $\to$ 14}   & $100 \pm 0.0 $             & $100 \pm 0.0$  \\
            \hline
        \end{tabular}
    }
\end{table}

\textbf{Detoxification:} To test semantic steering robustness, we applied the method to sentence detoxification using the ParaDetox dataset \citep{logacheva-etal-2022-paradetox}. This requires transforming toxic input into a non-toxic paraphrase while preserving meaning. As shown in Table \ref{table:detoxify}, the patched model without context reduced toxicity from 90.53\% to 6.22\%, closely matching the full-context baseline (6.7\%) while preserving semantic BERT scores against the ground-truth detoxified statements.

\begin{table}[h!]
    \centering
    \caption{Thought patch performance on detoxification.}
    \label{table:detoxify}
    {
        \begin{tabular}{l|c|c}
        \hline
        \textbf{Method} & \textbf{Toxicity (\%)} $\downarrow$ & \textbf{BERT} $\uparrow$ \\
        \hline
        Toxic Comment & $90.53 \pm 1.3$ & - \\
        Neutral Reference & $5.62 \pm 0.59$ & - \\
        \hline \hline
        \makecell{\hspace{-.15in}Original model\\with context} & $6.7 \pm 0.7$ & $68.09 \pm 0.51$ \\
        \makecell{\hspace{-.15in}Patched model\\ without context} & $6.22 \pm 1.6$ & $62.16 \pm 0.25$ \\
        \hline
        \end{tabular}
    }
\end{table}

\textbf{Translation and General Knowledge:} We further assessed thought patches on translation (French/Spanish to English) and stored knowledge retrieval (Country to Capital). Table \ref{table:tasks} demonstrates that the modified model without instructions achieves performance highly comparable to full prompting, and significantly better than the unpatched baseline.

\begin{table}[h!]
    \centering
    \caption{Thought patch performance on linguistic and knowledge tasks.}
    \label{table:tasks}
    {
        \begin{tabular}{c|c|c}
        \hline
        \textbf{Task  ($I$)}  & \makecell{\textbf{Original model} \\ \textbf{w/ context}\\ (Accuracy $\%$)} & \makecell{\textbf{Patched model} \\ \textbf{w/o context}\\ (Accuracy $\%$)} \\
        \hline 
        \hline
        \makecell{French to English \\ e.g. \textit{feu} $\to$ \textit{fire}} &  $97.37 \pm 0.01$  & $92.74 \pm 0.25$ \\
        \hline
        \makecell{Spanish to English \\ e.g. \textit{fuego} $\to$ \textit{fire}} &  $98.4 \pm 0.01$ & $94.99 \pm 0.43$ \\        
        \hline
        \makecell{Country to Capital \\ e.g. France $\to$ Paris} &  $79.0 \pm 0.21$  & $47.6 \pm 1.02$ \\
        \hline
        \end{tabular}
    }
\end{table}

\textbf{New Knowledge Integration:} Finally, we evaluated the ability to encode entirely new, synthetic key-value dictionaries (Table \ref{table:dictionary}).

\begin{table}[h!]
    \centering
    \caption{Thought patch performance on new knowledge injection.}
    \label{table:dictionary}
    {
    \begin{tabular}{l|c|r}
            \hline
            \makecell{\textbf{Dictionary} \\ \textbf{Size ($I$)}}    &  \makecell{\textbf{Original model} \\ \textbf{w/ context} \\ (Accuracy $\%$)}          & \makecell{\textbf{Patched model} \\ \textbf{w/o context} \\ (Accuracy $\%$)} \\ 
            \hline
            \hline
               $ k = 1$  & $100 \pm 0.0$    & $100 \pm 0.0$ \\
                $k = 2$ &  $100 \pm 0.0$     &  $100 \pm 0.0$\\
                $k = 4$ &  $100 \pm 0.0$     &  $100 \pm 0.0$ \\
                $k = 8$ &  $100 \pm 0.0$     & $99 \pm 1.0$ \\
                $k = 16$ &  $100 \pm 0.0$    & $98 \pm 1.0$\\
                $k = 32$ &  $100 \pm 0.0$    &  $91 \pm 1.0$ \\
        \hline
    \end{tabular}
    }
\end{table}

The thought patches successfully absorbed new information into the MLP weights. However, performance degrades slightly as dictionary size increases, and degrades significantly if the retrieval templates differ from those used during patch computation (see Appendix \ref{appendix:experiment_prompts}). This highlights a limitation: the current least-squares formulation is susceptible to overfitting on specific prompt structures.Note that for this task, the thought vector is absorbed into the $W_{\textrm{down}}$ projection rather than the bias, as detailed in \ref{appendix:gemma_modif2}. Experiment details are in Appendix \ref{appendix:dictionary}.

\section{Conclusion}

We establish a theoretical link between the transient mechanics of transformer inference and permanent model modification. By aggregating ephemeral token patches into reusable thought vectors and thought matrices, we provide a principled, data-efficient method to durably encode prompt instructions into weights. This framework offers a unified mathematical explanation for the success of heuristic control methods, confirming that contrastive averaging and low-rank updates are native mechanisms of transformer instruction encoding. Future work will focus on advanced regularization to prevent structural overfitting during knowledge injection.

\clearpage
\newpage

\bibliographystyle{unsrtnat}

\clearpage
\newpage

\appendix

\section{Algorithm modification for Gemma}
\label{appendix:gemma_modifications}

This appendix details the specific adaptations required to apply the Thought Patching Algorithm \ref{algorithm:gemma_thought_matrix} to the Gemma 3 architecture \citep{gemma_2025}, which differs from the vanilla transformer block primarily in its use of RMSNorm and Gated GeLU feed-forward networks.

\subsection{Adapting the Thought Matrix for Gated GeLU and Pre-normalization}
\label{appendix:gemma_modif1}

A standard Gemma block is depicted in Figure \ref{fig:gemma_block}.

As established in Section 3, the thought vector $\delta(I)$ corresponds to a bias shift in the residual stream and can be computed from the output $A(C,x)$ of the conceptual layer. However, the computation of the thought matrix $\Delta(I)$ requires careful selection of the input activation vector $a_l$.

In a standard transformer, the MLP input is often the direct output of the attention mechanism (or the residual stream). In Gemma, the input to the Multi-Layer Perceptron (MLP) passes through a normalization layer (RMSNorm) first. Therefore, strictly applying the update to the matrices $W_{\textrm{up}}$ and $W_{\textrm{gate}}$ requires using the activations \emph{after} this normalization. We denote this activation as $\hat{a}_l = \operatorname{RMSNorm}(z_l)$, corresponding to the ``Normalized input'' point in the computational graph.

Furthermore, the Gemma architecture utilizes a Gated GeLU activation function~\cite{shazeer2020glu}, meaning the first layer of the MLP is split into two distinct projections: an upward projection $W_{\textrm{up}}$ and a gating projection $W_{\textrm{gate}}$. The activation is computed as:
\begin{equation}
    \operatorname{MLP}_{\text{in}}(\hat{a}_l) = (W_{\textrm{gate}} \hat{a}_l) \odot \sigma(W_{\textrm{up}} \hat{a}_l)
\end{equation}
where $\sigma$ is the activation function (e.g., GeLU) and $\odot$ denotes element-wise multiplication. Consequently, the single thought matrix optimization problem described in Theorem \ref{theorem:minimization_main} must be decoupled into two independent least-squares estimations similarly to~\citet{goldwaser2025equivalence}. 

For a given layer $l$, let $\mathbf{\hat{a}}_l$ be the stacked normalized activations and $\mathbf{dz}_l$ be the target residual discrepancies. We solve:
\begin{align}
    dW_{l,\textrm{up}} &= \operatorname{argmin}_M \left\| M \mathbf{\hat{a}}_l - W_{l,\textrm{up}}^{\text{patched}} \mathbf{dz}_l \right\|_F^2 + \rho \|M\|_F^2 \\
    dW_{l,\textrm{gate}} &= \operatorname{argmin}_M \left\| M \mathbf{\hat{a}}_l - W_{l,\textrm{gate}}^{\text{patched}} \mathbf{dz}_l \right\|_F^2 + \rho \|M\|_F^2    
\end{align}
This ensures that the thought patch correctly influences both the gating mechanism and the content projection.

\subsection{Absorbing the Thought Vector into Weights and Scale}
\label{appendix:gemma_modif2}

A significant architectural challenge in applying the thought patches to Gemma is the absence of explicit bias parameters in the MLP output layer. While the theoretical framework in Section 2 predicts a thought vector update $b^{\text{patched}} = b + \delta(I)$, the Gemma architecture computes the MLP output via a strictly linear projection $W_{l,\textrm{down}}$:
\begin{equation}
    x_{l,\text{out}} = x_{l,\text{in}} + W_{l,\textrm{down}} h_l
\end{equation}
where $h_l$ is the hidden state after the Gated GeLU activation for layer $l$. To implement the affine shift $\delta(I)$ without an architectural modification (i.e., without inserting a new bias tensor), we adopt the projection method proposed by \citet{goldwaser2025equivalence} to absorb the shift directly into $W_\text{down}$.

An additional complexity arises from the use of RMSNorm after the MLP in Gemma. To address this, we follow the method from~\citet{goldwaser2025equivalence}. The objective is to modify $W_{\text{down}}$ such that its output effectively includes the shift $\delta(I)$, accounting for the specific scaling characteristics of the layer.

Let $d_l$ and $d^C_l$ denote the output activations of the linear projection $W_{l,\text{down}}$ in the non-contextual and contextual passes, respectively. Let $s_l$ represent the RMSNorm scale parameter for layer $l$. We first compute a target activation vector, $d_l^{\text{goal}}$, which incorporates the desired thought vector shift adjusted by the layer's normalization scale:
\begin{equation}
    d_l^{\text{goal}} = (\delta(I) \oslash s_l) + d^C_l - d_l
\end{equation}
where $\oslash$ denotes element-wise division. This adjustment ensures that the bias shift aligns correctly with the layer's activation distribution.

To prevent the weight update from destabilizing the model by altering the norm of the MLP output significantly, we impose a magnitude constraint. We construct a normalized target $\hat{d}_l^{\text{goal}}$ that preserves the direction of the calculated goal but matches the magnitude of the ground-truth contextual activations:
\begin{equation}
    \hat{d}_l^{\text{goal}} = \frac{\|d_l^C\|_2}{\|d_l^{\text{goal}}\|_2} \cdot d_l^{\text{goal}}
\end{equation}

Finally, we find the optimal update for the projection matrix by solving the least-squares minimization problem over the activations $h_l$:
\begin{equation}
    dW_{l,\text{down}} = \operatorname{argmin}_{M} \left\| M h_l - \hat{d}_l^{\text{goal}} \right\|^2_F + \rho \|M\|_F^2
\end{equation}
Although this formulation technically allows for updating the RMSNorm scale $s_l$ to correct residual magnitude discrepancies, we find empirically that updating only the matrix $W_{\text{down}}$ and leaving $s_l$ as its original value is sufficient for high-fidelity instruction following and knowledge absorption and helps maintain performance.

\begin{figure*}[h]
    \centering
\begin{tikzpicture}[
    scale=0.7,
    node distance=0.7cm and 0.6cm,
    block/.style = {
        rectangle, 
        rounded corners, 
        draw=black, 
        fill=gray!20,
        minimum height=2.5em, 
        minimum width=5em, 
        text centered
    },
    oper/.style  = {
        circle, 
        draw=black, 
        font=\Large, 
        inner sep=0pt, 
        minimum size=7mm
    },
    conn/.style = {-{Stealth[length=3mm]}}, 
    annot/.style = {text=blue, font=\footnotesize, align=left, anchor=west}
]

    \node[block] (norm1) {$\text{RMSNorm}_3$};
    \node[block, above left=of norm1] (w_gate) {$W_{\text{gate}}$};
    \node[block, above=of norm1] (w_up) {$W_{\text{up}}$};
    \node[block, above=of w_gate] (gelu) {GeLU};
    \node[oper, at=(gelu -| w_up)] (mult1) {$\otimes$};
    \node[block, above=of mult1] (w_down) {$W_{\text{down}}$};
    \node[block, above=of w_down] (norm2) {$\text{RMSNorm}_4$};
    \node[oper, above=of norm2] (add) {$\oplus$};

    \coordinate[above=1cm of add] (output);
    \coordinate[below=0.5cm of norm1] (branch_point);
    \coordinate (res_corner) at ([xshift=1.2cm]w_up.east |- branch_point);

    \def\leftblockshift{9cm} 

    \node[block] (ctx_norm1) at ($(norm1)-(\leftblockshift,0)$) {$\text{RMSNorm}_1$};
    \node[oper] (ctx_add) at ($(add)-(\leftblockshift,0)$) {$\oplus$};

    \path (ctx_norm1) -- (ctx_add) 
        node[pos=0.33, block] (sa) {SelfAttention}
        node[pos=0.66, block] (ctx_norm2) {$\text{RMSNorm}_2$};

    \coordinate[below=1.3cm of ctx_norm1] (global_input);
    \coordinate[below=0.5cm of ctx_norm1] (ctx_branch_point);
    \coordinate (ctx_res_corner) at ([xshift=-1.2cm]sa.west |- ctx_branch_point);

    \draw[conn] (global_input) -- (ctx_branch_point) -- (ctx_norm1);
    \draw[conn] (ctx_norm1) -- (sa);
    \draw[conn] (sa) -- (ctx_norm2);
    \draw[conn] (ctx_norm2) -- (ctx_add);
    \draw[conn] (ctx_branch_point) -- (ctx_res_corner) |- (ctx_add);

    \coordinate (gap_center) at ($(sa.east)!0.5!(w_gate.west)$);
    
    \coordinate (path_top) at ($(ctx_add.north)+(0, 1.1)$);
    
    \coordinate (path_bottom) at ($(branch_point)-(0, 1.1)$);
    
    \draw[conn, rounded corners] (ctx_add) 
        -- (ctx_add |- path_top)       
        -- (gap_center |- path_top)    
        -- (gap_center |- path_bottom) 
        -- (branch_point |- path_bottom) 
        -- (norm1);             

    \draw[conn] (branch_point) -- (norm1);
    \coordinate (split) at ($(norm1.north)+(0,0.35)$);
    \draw (norm1) -- (split);
    \draw[conn] (split) -| (w_gate.south);
    \draw[conn] (split) -| (w_up.south);
    \draw[conn] (w_gate) -- (gelu);
    \draw[conn] (gelu) -- (mult1);
    \draw[conn] (w_up) -- (mult1);
    \draw[conn] (mult1) -- (w_down);
    \draw[conn] (w_down) -- (norm2); 
    \draw[conn] (norm2) -- (add); 
    \draw[conn] (add) -- (output);
    \draw[conn] (branch_point) -- (res_corner) |- (add);

    \def\lblx{5.5}
    \node[annot] (lbl_input) at ($(branch_point)+(\lblx,-0.2)$) {MLP Input:\\$\mathbf{a} = A(C,\mathbf{x})$};
    \draw[blue, ->] (lbl_input.west) -- ($(branch_point)+(0,-0.2)$);
    \node[annot] (lbl_norm) at ($(split)+(\lblx,-0.18)$) {Normalized input:\\$\hat{\mathbf{a}} = \text{RMSNorm}(A(C,\mathbf{x}))$};
    \draw[blue, ->] (lbl_norm.west) -- ($(split)+(0,-0.18)$);
    \coordinate (h_loc) at ($(mult1.north)!0.5!(w_down.south)$);
    \node[annot] (lbl_h) at ($(h_loc)+(\lblx,-0.1)$) {Hidden activations: $\mathbf{h}$};
    \draw[blue, ->] (lbl_h.west) -- ($(h_loc)+(0,-0.1)$);
    \coordinate (d_loc) at ($(w_down.north)!0.5!(norm2.south)$);
    \node[annot] (lbl_d) at ($(d_loc)+(\lblx,-0.1)$) {MLP output: $\mathbf{d}$};
    \draw[blue, ->] (lbl_d.west) -- ($(d_loc)+(0,-0.1)$);

    \begin{scope}[on background layer]
        \node[
            fill=gray!10, 
            draw=gray!70, 
            rounded corners,
            inner sep=10pt, 
            fit=(ctx_norm1) (sa) (ctx_norm2) (ctx_add) (ctx_res_corner),
            label={[gray!70, font=\bfseries, yshift=1cm]north:Contextual block: $A(C,\mathbf{x})$} 
        ] (ctx_background_box) {};

        \node[
            fill=gray!10, 
            draw=gray!70, 
            rounded corners,
            inner sep=10pt, 
            fit=(norm1) (w_up) (w_gate) (norm2) (add) (res_corner),
            label={[gray!70, font=\bfseries, yshift=1cm]north:MLP Block}
        ] (background_box) {};
    \end{scope}

\end{tikzpicture}
\caption{The Gemma block architecture. Note the pre-feedforward and post-feedforward normalization, the split gating mechanism and the lack of a bias parameter, which necessitate the specific algorithm modifications detailed in Appendix A.}
    \label{fig:gemma_block}
\end{figure*}

\clearpage
\newpage

\section{Experiment Details and Algorithm}
\label{appendix:experiments_details}

All experiments were conducted using the Gemma 3.0 1B model \citep{gemma_2025}. As noted in the main text, the specific architectural features of Gemma—namely RMSNorm and GeLU gating—necessitate the modifications detailed in Appendix \ref{appendix:gemma_modifications}.

For each experimental task, we record and align the model's intermediate activations across two distinct pass types:
\begin{enumerate}
    \item \textbf{Contextual Pass (Target):} The model processes the full prompt containing the specific instruction $I$ (the \texttt{context}).
    \item \textbf{Non-Contextual Pass (Approximation):} The model processes only the input data (\texttt{completion}) wrapped in the standard chat template, without the explicit instruction $I$.
\end{enumerate}

\subsection{Arithmetic and Linguistic Tasks}
\label{appendix:small_tasks}

\paragraph{Arithmetic Experiments.} 
The instruction context $I$ was defined as either ``Sum the numbers:'' or ``Multiply the numbers:''. To establish a rigorous baseline where the unpatched model achieves 100\% accuracy with context, we restricted the dataset to operations on three single-digit integers. 

For the non-contextual pass, the line ``Multiply the numbers:'' is omitted. We constructed the thought patches using a dataset $\mathcal{D}$ of 10 completions per task. We employed a learning rate of $\eta = 0.1$, a batch size of 1, and no regularization ($\rho = 0$). Convergence to 100\% accuracy on the training set was observed after 10 steps for both multiplication and summation. We validated the induced thought patches on a held-out test set of 20 randomly regenerated examples at each step, achieving 100\% peak test accuracy over the 10 steps for each of 5 random seeds.

\subsection{Detoxification}\label{appendix:detoxification} 

We used the ParaDetox dataset from HuggingFace. We used only 10 examples from the ParaDetox training split to compute the weight update with one training step and a learning rate $\eta = 1.0$ and no regularization $\rho = 0$. For evaluation we compared the original model with the given detoxification instruction against the patched model with no detoxification instruction; i.e., given only the toxic sentence. 

\subsection{Translation, Knowledge and Algorithmic Tasks}
\label{appendix:translation}

\paragraph{Translation Experiment.}
We evaluated the capability of thought patches to induce translation behavior from a foreign source language (e.g., French or Spanish) to English. We focus only on dictionary translation of specific words and not phrases. For the non-contextual pass, the instruction to translate is omitted and only the French (or Spanish) word is provided to the model. We computed the thought patches using a dataset of 10 completions using a learning rate of $\eta = 1.0$ and no regularization, $\rho = 0.0$. We validated the original model and the patched model on a hold out set of 100 French (or Spanish) words. Because there is some variability in translation, we evaluate performance based on semantic similarity using a pre-trained BERT model. This score is reported as a percentage.

\paragraph{Prior Knowledge.} For this task, the contextual prompt asks the model to return the capital city when provided a country or, vice versa, return the country when given a capital city. For these experiments we used 5 demonstration examples with a learning rate of 1.0 and no regularization $\rho = 0.0$. Since knowledge of a capital city or country is binary, we report the exact match accuracy. We compute the accuracy over 100 examples of pairs of country and capitals that were not used to compute the thought patch.

\paragraph{Algorithmic Task.} To assess simple algorithmic manipulation, we asked the model to determine the first letter of a string. We used 5 demonstrations, $\eta = 1.0$, and $\rho = 0.0$, reporting exact match accuracy over 100 held-out examples.

\subsection{New Knowledge Integration}
\label{appendix:dictionary}

We report retrieval accuracy for dictionary sizes $k \in \{1, 2, 4, 8, 16, 32\}$. For larger dictionaries ($k \geq 16$), we had to perform a rank reduction by keeping the 40 principal components of the matrices returned by the least-square solver. We observe also a significant loss of performance when the retrieval queries used to evaluate the patch are different from the ones used to compute the patch, which points to a need to properly tune the regularization parameter, and constitute a limitation of this work.

\begin{table}[h!]
    \centering
    \caption{Thought patch performance on new knowledge evaluated on different queries.}
    \label{table:dictionary_different_queries}
    {
    \begin{tabular}{|l|c|r|}
            \hline
             Dict.      & Accuracy          & Accuracy     \\
             Size (I)   & patched model     & patched model  \\
                        & same queries (\%)      & different queries (\%)\\
            \hline
                k = 1  & 100 $\pm$ 0.0    & 100 $\pm$  0.0  \\
                k = 2  & 100 $\pm$ 0.0    & 100 $\pm$  0.0  \\
                k = 4  & 100 $\pm$ 0.0    & 83.3 $\pm$ 4.0  \\
                k = 8  &  99 $\pm$ 1.0    & 79.1 $\pm$  6.7 \\
                k = 16 & 98 $\pm$ 1.0     & 53.0 $\pm$ 6.6  \\
                k = 32 & 91 $\pm$ 1.0     & 42.7 $\pm$ 7.8  \\
        \hline
    \end{tabular}
    }
\end{table}

\clearpage
\newpage

\section{Prompt Experiment Details}
\label{appendix:experiment_prompts}

To ensure precise alignment of the hidden states during the least-squares estimation, we include the model's generated response (the \texttt{answer}) in both passes. This ensures the patch minimizes the divergence between the \textit{instructed} and \textit{uninstructed} processing of the exact same token sequence. The prompt templates are structured as follows:

\begin{verbatim}
# Contextual Prompt (Target)
<start_of_turn>user
{context}{completion}<end_of_turn>
<start_of_turn>model
{answer}<end_of_turn>

# Non-Contextual Prompt (Approximation)
<start_of_turn>user
{completion}<end_of_turn>
<start_of_turn>model
{answer}<end_of_turn>
\end{verbatim}

\subsection{Detoxification Instructions}
For detoxification experiments, we used the following contextual instruction:
\begin{quote}
\texttt{Please detoxify the following sentence. Return a single detoxified statement.
Do not provide any commentary or suggestions. Return a single detoxified statement that is a detoxified version of the original sentence but that still has the same semantic meaning as the original. Here is the original toxic sentence:}
\end{quote}

\subsection{Translation Instructions}
For French-to-English we use the context $I$ to be:
\begin{verbatim}
Translate the following word from French
to English. Return only a single English
word that is the translation of the
French word {foreign_source_word}.
\end{verbatim}
Similarly, for Spanish-to-English we use the context:
\begin{verbatim}
Translate the following word from 
Spanish to English. Return only a single 
English word that is the translation of
the Spanish word {foreign_source_word}.
\end{verbatim}

\subsection{Knowledge Retrieval Instructions}
For retrieving capitals, the instruction prompt used to compute the thought patch over demonstrations is:
\begin{quote}
\texttt{Give me the capital of the following country. Return only a single city that is the capital of the country}
\end{quote}
And for retrieving countries:
\begin{quote}
\texttt{Give me the country of the following capital city. Return only a single country that is the country of the capital}
\end{quote}

\subsection{Dictionary Formats and Divergent Queries}
For the dictionary encoding task, the \texttt{context} was the data of the dictionary assignment \texttt{M=\{...\}}, while the \texttt{completion} was one of the following query versions, chosen randomly:
\begin{verbatim}
Answer with one word, \
  - the value of {dict_name}["{key}"] is
  - {dict_name}["{key}"] equals
  - the value of {dict_name} \
    for key "{key}" is
\end{verbatim}

We observed a significant loss of performance when the retrieval queries used to evaluate the patch are different from the ones used to compute the patch. Table \ref{table:dictionary_different_queries} reports the accuracy when we use the following queries to evaluate rather than the ones mentioned above:
\begin{verbatim}
Answer with one word, \
  - what value corresponds \
    to key "{key}" in {dict_name}?
  - retrieve the content \
    of {dict_name}["{key}"].
  - what is stored in {dict_name} 
    under the key "{key}"?
\end{verbatim}

\begin{table}[h!]
    \centering
    \caption{Thought patch performance on new knowledge evaluated on different queries.}
    \label{table:dictionary_different_queries}
    {
    \begin{tabular}{|l|c|r|}
            \hline
             Dict.      & Accuracy          & Accuracy     \\
             Size (I)   & patched model     & patched model  \\
                        & same queries (\%)      & different queries (\%)\\
            \hline
                k = 1  & 100 $\pm$ 0.0    & 100 $\pm$  0.0  \\
                k = 2  & 100 $\pm$ 0.0    & 100 $\pm$  0.0  \\
                k = 4  & 100 $\pm$ 0.0    & 83.3 $\pm$ 4.0  \\
                k = 8  &  99 $\pm$ 1.0    & 79.1 $\pm$  6.7 \\
                k = 16 & 98 $\pm$ 1.0     & 53.0 $\pm$ 6.6  \\
                k = 32 & 91 $\pm$ 1.0     & 42.7 $\pm$ 7.8  \\
        \hline
    \end{tabular}
    }
\end{table}

\clearpage
\newpage

\section{Thought Matrix Estimation Theorem}
\label{appendix:thought_matrix}

In Section 2, we introduced the thought matrix representing the thought expressed in a chunk $I$ of a prompt as the matrix $\Delta(I)$ that that minimizes the errors $\|\Delta(I)a_i - \Delta_i a_i\|^2$ for all completions $[I, x_1, \dots, x_i]$ formed by all the partial prompts in our collection of prompts, and where 
$$\Delta_i = \frac{\delta_i a_i^T}{\|a_i\|^2}$$
is the token matrix for token $x_i$ with attention $a_i$ in the absence of $I$ and $\delta_i$ the corresponding thought vector.

\begin{theorem}
\label{theorem:minimization}
Consider $n$ vectors $y_1, \dots, y_n$ in $\mathbb R^d$ with which we form the operators $\Delta_i = \frac {\delta_i y_i^T}{\|y_i\|^2} $ where the $\delta_i\in \mathbb R^d$ are fixed vectors.  
Then the following minimization problem over the space of $d\times d$ matrices
\begin{equation}\label{equation:minimization}
\min_M \sum_{i=1}^n \|My_i - \Delta_i y_i\|^2,
\end{equation}
has a unique solution \emph{if and only if} the operator
$Z = \sum_{i=1}^n y_i y_i^T$
is invertible. In this case the minimum is reached by
\begin{equation}
M = \left(\sum_{i=1}^n \delta_i y_i^T\right)Z^{-1},
\end{equation}
which is a global minimum.
\end{theorem}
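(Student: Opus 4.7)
The plan is to reduce the problem to a standard linear least-squares problem in matrix form, and then read off existence, uniqueness, and the explicit formula from the normal equations.

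First I would exploit the key algebraic simplification that each operator $\Delta_i$ sends its own defining vector $y_i$ to $\delta_i$:
$$\Delta_i y_i = \frac{\delta_i y_i^T}{\|y_i\|^2}\,y_i = \delta_i.$$
This collapses the objective to $F(M) = \sum_{i=1}^n \|My_i - \delta_i\|^2$, a standard multivariate regression in $M$. Stacking the data into $Y = [y_1\,\cdots\,y_n]\in\R^{d\times n}$ and $D = [\delta_1\,\cdots\,\delta_n]\in\R^{d\times n}$ rewrites it as $F(M) = \|MY - D\|_F^2$.

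Next I would compute the gradient. A short matrix-calculus computation (equivalently, treating the rows of $M$ independently and invoking the usual least-squares derivation) gives
$$\nabla_M F(M) = 2(MY - D)Y^T = 2\Bigl(MZ - \sum_{i=1}^n \delta_i y_i^T\Bigr),$$
where $Z = YY^T = \sum_i y_iy_i^T$. Setting this to zero yields the normal equations $M\,Z = \sum_{i=1}^n \delta_i y_i^T$. Since $F$ is a convex quadratic in $M$ (its row-wise Hessian is $2Z \succeq 0$), any critical point is automatically a global minimum. When $Z$ is invertible I obtain the unique minimizer $M = \bigl(\sum_i \delta_i y_i^T\bigr)Z^{-1}$, which is the claimed formula.

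The delicate part is the converse direction, showing non-uniqueness when $Z$ is singular. I would pick a nonzero $v\in\ker Z$ and observe that $0 = v^TZv = \sum_i (v^Ty_i)^2$ forces $v^Ty_i=0$ for every $i$. Then for any minimizer $M^\star$ and any $u\in\R^d$, the perturbation $M^\star + uv^T$ acts identically on each $y_i$ and hence attains the same value of $F$, producing an affine family of distinct minimizers. A brief existence check, that $\sum_i \delta_i y_i^T$ always lies in the image of the map $M\mapsto MZ$ because each of its rows lies in the row span of $\{y_i^T\}$, which coincides with the row space of $Z$, completes the biconditional. The main obstacle I anticipate is precisely this "only if" direction of uniqueness; once the kernel-perturbation argument above is in hand, the rest is routine.
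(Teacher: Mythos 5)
Your proof is correct and follows essentially the same route as the paper's: the identity $\Delta_i y_i = \delta_i$, the gradient computation yielding the normal equations $MZ = \sum_i \delta_i y_i^T$, and the closed-form $M = \bigl(\sum_i \delta_i y_i^T\bigr)Z^{-1}$ when $Z$ is invertible. Two places where your version is tighter are worth noting. First, you justify globality via convexity of the quadratic $F$, which is immediate; the paper instead argues that $L$ is nonnegative and tends to infinity as $\|M\|\to\infty$, which is also valid but more roundabout. Second, and more substantively, your ``only if'' direction is cleaner and fully rigorous: you take a nonzero $v\in\ker Z$, use $0 = v^T Z v = \sum_i (v^T y_i)^2$ to deduce $v^T y_i = 0$ for all $i$, and then show $M^\star + uv^T$ is a minimizer for every $u$, producing an explicit affine family. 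The paper instead right-multiplies a putative unique minimizer $M$ by an operator $A$ that is the identity on $\operatorname{Image}(Z)$ and acts nontrivially on its orthogonal complement, and asserts $MA\neq M$; that assertion is not automatic (it fails, for instance, if $M$ already annihilates $\ker Z$), so your rank-one kernel perturbation sidesteps a small gap in the paper's argument. Your explicit existence check — that each row of $\sum_i \delta_i y_i^T$ lies in the row space of $Z$ — is likewise a worthwhile addition that the paper leaves implicit.
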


\begin{proof}
The minimum is achieved at a critical point of the error function
\begin{equation}
L(M) = \sum_{i=1}^n \|My_i - \Delta_i y_i\|^2,
\end{equation}
which is a point at which its gradient vanishes, i.e., $\nabla_M L(M) = 0$. Since the gradient of $L$ is given by
\begin{equation}
\nabla_M L(M) = 2 \sum_{i=1}^n (My_i - \Delta_i y_i)y_i^T.
\end{equation}
then $M$ is a critical point if and only if:
\begin{equation}
\sum_{i=1}^n (My_i - \Delta_i y_i)y_i^T = 0,
\end{equation}
which we can rewrite as
\begin{equation}
M \left(\sum_{i=1}^n y_i y_i^T\right) =  \sum_{i=1}^n \Delta_i y_i y_i^T.
\end{equation}
Now since for the operator $\Delta_i$, we have the following property
\begin{equation}
\Delta_i y_i y_i^T = \frac{\delta_i (y_i^T y_i) y_i^T}{\|y_i\|^2}= \delta_iy_i^T
\end{equation}
then $M$ is a critical point of $L$ if and only if
\begin{equation} 
M Z =  \sum_{i=1}^n \delta_i y_i^T.
\end{equation}
Now if $Z$ is invertible, we obtain that the critical exists and is unique given by
\begin{equation} \label{equation:minimization_solution}
M =  \left(\sum_{i=1}^n \delta_i y_i^T\right) Z^{-1}.
\end{equation}

To prove the converse, suppose by contradiction that $M$ exists and is unique but that $Z$ is not invertible. Since $Z$ is not invertible, its image $V = \operatorname{Image}(Z)$ is a strick subspace of $\mathbb R^d$. Consider an operator $A$ that is the identity on $V$ but move the orthogonal space $V^T$ around. Then $MAZ = MZ$ but $M\neq MA$ since the two operators have a different action on $V^T$. This means that for the new operator $M' = MA$, we also have that 
\begin{equation}
M' Z =  \sum_{i=1}^n \delta_i y_i^T,
\end{equation}
which means that it is a critical point of $L_M$ and $M'\neq M$, contradicting the uniqueness assumption. 

Observe at this point, that to fully complete the proof, we should ensure that M is a global minimum, not just a critical point. Under our assumption that $Z$ is invertible, we just showed that $L(M)$ has a single critical point. Since $L(M)$ is positive and goes to infinity as $M$ becomes large, then this single critical point can only be a minima. Since there is only one minima, it is a global minima.
\end{proof}

\subsection{Invertibility Conditions for $Z$}
\label{appendix:invertibility_conditions}

We now gives conditions on the vector $y_1, \dots, y_n$ for the matrix $Z = \sum_i y_iy_i^T$ in Theorem \ref{theorem:minimization} to be invertible. The proof of our statements are in Appendix \ref{appendix:low_rank_lematata}:

\begin{enumerate}

\item $Z$ is invertible \emph{if and only if} $y_1, \dots, y_n \in \mathbb R^d$ span the whole vector space (see Lemma \ref{lemma:invertibility_when_independence}); however, in this case the form of the inverse is difficult to compute explicitly.

\item When $y_1, \dots, y_n$ is a basis of the space (which implies that $n=d$), then the inverse takes the form $Z^{-1} = \sum_i \omega_i \omega_i^T$, where the vectors $\omega_i$ are the rows of $Y^{-1}$. (See Lemma \ref{lemma:invertibility_when_basis}.)

\item When $y_1, \dots, y_n$ is an  orthonormal basis ($n=d$) of the space $Z^{-1} = I$. (See Lemma \ref{lemma:invertibility_when_orgonal_basis}.)

\item When $y_1, \dots, y_n$ are vectors independently sampled from a spherical distribution, then for $n$ large enough $Z^{-1} = \frac 1{\sigma^2n} I$, where $\sigma^2$ is the distribution variance. (See Lemma \ref{lemma:invertibility_when_spherical_distribution}.)

\end{enumerate}

\subsection{Getting an approximation of $M$ when $Z$ is not invertible}
\label{appendix:non-invertible case}

We are seeking operators $M$ such that when applied to the $y_i$'s they give back the $\delta_i$'s as closely as possible. That is, we are looking to minimize the following linear regression problem:
\begin{equation}
\min_M \sum_i \|M y_i - \delta_i\|^2
\end{equation}
In fact, the proof of Theorem \ref{theorem:minimization} shows us that any matrix satisfying the following equation is an optimum:
\begin{equation}
M Z = \delta Y^T,
\end{equation}
where $Z = YY^T$. The solution is clear when $Z$ is invertible and given in Theorem \ref{theorem:minimization}, but in practice it may not be. In this case, we can always add a small diagonal matrix $\epsilon = \operatorname{diag}(\epsilon_1, \dots, \epsilon_d)$ to $Z$ to render it invertible: $Z_e = e + Z = e(1 + e^{-1}Z)$, whose inverse can be approximated as
\begin{equation}
Z_e^{-1} = (1 + e^{-1}Z)^{-1}e^{-1} \simeq (1 - e^{-1}Z)e^{-1} = c - c Z c,
\end{equation}
where in the last equation we set $c = \operatorname{diag}(c_1, \dots, c_d)$ to be the inverse of $\epsilon$ (i.e. $c_i = 1/\epsilon_i)$. Now, we can solve $MZ_e = \delta Y^T$ using this approximated inverse, yielding:
\begin{equation}
M = \delta Y^T (c - c Z^2) = \delta Y^T c - c\delta Y^T c.
\end{equation}
In particular, if we take $c$ to be a constant $\lambda$ time the identity, we obtain the following approximation:
\begin{equation}
M = \lambda \sum_{i=1}^n \delta_i y_i^T - \lambda^2 \sum_{i,j=1}^n 
\langle y_i, y_j\rangle \delta_i y_j^T,
\end{equation}
where we will understand $\lambda$ as a tunable hyper-parameter.

\

\section{Useful properties of low-rank operators}
\label{appendix:low_rank_lematata}

Consider linear a application $A:\R^d\rightarrow \R^d$ represented by a matrix
\begin{equation}
A  =  \sum_{i=1}^r v_i w_i^T, 
\end{equation}
where $v_i,w_i$ are column vectors in $\R^d$ with $r < d$. If we write $V$ to be the matrix whose columns are the $v_i's$ and $W$ the matrix whose columns are the $w_i$'s we can write $A$ in matrix notation as
\begin{equation}
A = V W^T.
\end{equation}

\subsection{Image, kernel, and rank}

\begin{lemma}
\label{lemma:low_rank_spans}
Consider a low-rank operator with matrix given by $A = \sum_{i=1}^r v_i w_i^T$ as above. Let us denote by  $V = \operatorname{span}\{v_i\}$ and $W = \operatorname{span}\{w_i\}$ the linear subspaces spanned by the vectors $v_i$ and  $w_i$ respectively for $i=1,\dots, r$. Then the rank of $A$ is bounded by $r$. More precisely, we have that
\begin{equation}
\operatorname{rank}(A) \leq \min \{\dim V, \dim W\}\leq r.
\end{equation}
and that 
\begin{equation}
\operatorname{image}(A)\subset V 
\quad\textrm{and}\quad
W^\perp \subset \operatorname{kernel(A)}.
\end{equation}
Moreover when the $v_i$'s and the $w_i$'s are independent then we have equality everywhere:
\begin{equation}
\operatorname{rank}(A) = r, \quad \operatorname{image}(A) = V,\quad    \operatorname{kernel(A)} = W^\perp.
\end{equation}
\end{lemma}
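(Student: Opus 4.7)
The plan is to work directly from the pointwise action $Ax = \sum_{i=1}^r (w_i^T x)\, v_i$, which is immediate from the definition, and to read off the three set-theoretic claims before upgrading them to equalities under independence.

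First I would establish the image inclusion and one rank bound in a single stroke: for any $x \in \mathbb{R}^d$ the vector $Ax$ is a linear combination of the $v_i$'s, hence $\operatorname{image}(A) \subset V$ and $\operatorname{rank}(A) \leq \dim V \leq r$. Next, for the kernel inclusion, if $x \in W^\perp$ then $w_i^T x = 0$ for every $i$, so $Ax = 0$, giving $W^\perp \subset \operatorname{kernel}(A)$. The second rank bound $\operatorname{rank}(A) \leq \dim W$ then follows from rank-nullity: $\operatorname{rank}(A) = d - \dim \operatorname{kernel}(A) \leq d - \dim W^\perp = \dim W \leq r$.

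For the equality statements, assume the $v_i$'s and $w_i$'s are each linearly independent so that $\dim V = \dim W = r$. To get $\operatorname{image}(A) = V$, I would invoke a dual-basis argument: since $\{w_i\}$ is independent, there exist vectors $x_1,\dots,x_r \in \mathbb{R}^d$ satisfying $w_i^T x_j = \delta_{ij}$ (for instance the columns of $W(W^T W)^{-1}$, since $W^T W$ is invertible by independence of the $w_i$'s). Then $A x_j = \sum_i (w_i^T x_j) v_i = v_j$, so every $v_j$ lies in the image, which combined with the earlier inclusion gives $\operatorname{image}(A) = V$ and $\operatorname{rank}(A) = r$. For $\operatorname{kernel}(A) = W^\perp$, suppose $Ax = 0$; then $\sum_i (w_i^T x) v_i = 0$, and independence of the $v_i$'s forces $w_i^T x = 0$ for all $i$, i.e.\ $x \in W^\perp$, closing the inclusion in the remaining direction.

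I do not anticipate a serious obstacle here; the one place requiring a little care is the construction of the dual vectors $x_j$ in the equality part, since this is what converts the purely algebraic identity into genuine surjectivity onto $V$. Everything else is a one-line computation using $Ax = \sum_i (w_i^T x)\, v_i$ together with rank-nullity.
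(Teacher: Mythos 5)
Your proof is correct, and the first half (the inclusions $\operatorname{image}(A)\subset V$, $W^\perp\subset\operatorname{kernel}(A)$, and the two rank bounds via $\dim V$ and via rank--nullity through $W^\perp$) is essentially identical to the paper's. The divergence is in the equality part. The paper's route is: prove $\operatorname{kernel}(A)=W^\perp$ exactly (using independence of the $v_i$'s, just as you do), then conclude $\operatorname{rank}(A)=d-\dim W^\perp=r$ by rank--nullity, and finally get $\operatorname{image}(A)=V$ by dimension counting, since $\operatorname{image}(A)\subset V$ with both spaces of dimension $r$. Your route instead establishes $\operatorname{image}(A)=V$ \emph{directly} by constructing dual vectors $x_j$ with $w_i^T x_j=\delta_{ij}$ (e.g.\ columns of $W(W^TW)^{-1}$), so that $Ax_j=v_j$ exhibits each $v_j$ in the image; rank $=r$ then falls out for free, and the kernel equality is proved separately as in the paper. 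The dual-vector argument is a bit more constructive and avoids the need to chain through rank--nullity twice, at the cost of the small extra observation that $W^TW$ is invertible when the $w_i$ are independent. Either path is fine; yours arguably reads more cleanly, since the paper asserts $\operatorname{rank}(A)=r$ a line before it has the kernel equality that actually justifies it.

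One cosmetic caution: the lemma reuses $W$ both for $\operatorname{span}\{w_i\}$ and (earlier in the appendix) for the matrix with columns $w_i$; you inherit that overload when writing $W(W^TW)^{-1}$. Worth disambiguating if this were to be written up, but it is not a mathematical gap.
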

\begin{proof}
First of all, we trivially have that $\operatorname{image}(A) \subset V$ and $W^\perp \subset \operatorname{kernel}(A)$. 
Now, since $\operatorname{rank}(A) = \dim \operatorname{image}(A)$ by definition, we immediately have the $\operatorname{rank}(A) \leq \dim V$. 
Now consider the kernel-image relation, that is, that $\dim \operatorname{image}(A) + \dim \operatorname{kernel}(A) = d$, where $d$ is the dimension of the space. Combining this relation with $W^\perp \subset \operatorname{kernel}(A)$, we obtain the inequality
\begin{equation}
\operatorname{rank}(A) + \dim W^\perp \leq d.
\end{equation}
Now since $\dim W^\perp = d - \dim W$, we obtain from the inequality above that
\begin{equation}
\operatorname{rank}(A) + d - \dim W \leq d,
\end{equation}
which yields that $\operatorname{rank}(A) \leq \dim W$, proving the first part of the statement. 

Now, let us consider the case when the $v_i$'s and the $w_i$'s are independent. By independence we immediately obtain that $\dim V = \dim W = r$, and therefore we get that $\operatorname{rank}(A) = r$. Since $\operatorname{image}(A) \subset V$ and both spaces have the same dimension $r$, they must coincide: $\operatorname{image}(A) = V$

As for the kernel, to show that $\operatorname{kernel}(A) = W^\perp$, we only need to show the other inclusion direction: $\operatorname{kernel}(A) \subset W^T$. Let us take a vector $x$ in the kernel of $A$. Then we get that
\begin{equation}
0 = Av = \sum_{i=1}^r v_iw_i^Tx = \sum_{i=1}^r \langle w_i, x\rangle w_i.
\end{equation}
By independence of the $w_i$'s, we obtain that $\langle w_i, x\rangle = 0$ for $i=1,\dots, r$. This exactly means that $x$ is orthogonal to $W$, that is, $v\in W^\perp$. Thus $\operatorname{kernel}(A) = W^\perp$. 

\end{proof}

\subsection{Independence, span, and basis}

\begin{lemma}
\label{lemma:invertibility_when_independence}
Let $y_1,\dots, y_n\in \mathbb R^d$ and consider the linear map
\begin{equation}
Z = \sum_{i=1}^n y_i y_i^T.
\end{equation}
Then $Z$ is invertible if and only if the $y_i$'s span the vector space $\mathbb R^d$. 
\end{lemma}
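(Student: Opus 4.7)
The plan is to prove both directions by exploiting the fact that $Z$ is a symmetric, positive semi-definite operator, so invertibility is equivalent to having trivial kernel, which in turn is equivalent to strict positive-definiteness.

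For the forward direction (invertibility $\Rightarrow$ spanning), I would invoke Lemma \ref{lemma:low_rank_spans} directly, applied with $v_i = w_i = y_i$. That lemma yields the inclusion $\operatorname{image}(Z) \subset \operatorname{span}\{y_i\}$. If $Z$ is invertible, then $\operatorname{image}(Z) = \mathbb{R}^d$, forcing $\operatorname{span}\{y_i\} = \mathbb{R}^d$.

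For the reverse direction (spanning $\Rightarrow$ invertibility), since $Z$ is a square $d \times d$ matrix, it suffices to show that $\ker Z = \{0\}$. The key observation is the quadratic form identity: for any $x \in \mathbb{R}^d$,
\begin{equation}
x^T Z x = \sum_{i=1}^n x^T y_i y_i^T x = \sum_{i=1}^n \langle y_i, x\rangle^2.
\end{equation}
Now suppose $Zx = 0$. Then $x^T Z x = 0$, and the sum of squares above vanishes, which forces $\langle y_i, x\rangle = 0$ for every $i$. Thus $x \in \operatorname{span}\{y_i\}^\perp$. By the spanning hypothesis, $\operatorname{span}\{y_i\}^\perp = \{0\}$, so $x = 0$. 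Hence $\ker Z = \{0\}$ and $Z$ is invertible.

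There is no significant obstacle here: both directions reduce to one-line arguments, the forward direction via the image-containment of Lemma \ref{lemma:low_rank_spans} and the reverse via the positive semi-definiteness of $Z$ combined with the orthogonality conclusion from the vanishing sum of squares. The only subtlety worth flagging is ensuring that one uses the square structure of $Z$ (so that $\ker Z = \{0\}$ implies invertibility), which is automatic since $Z$ acts on $\mathbb{R}^d$ and is represented by a $d \times d$ matrix.
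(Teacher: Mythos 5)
Your proof is correct, and it differs from the paper's in a way worth noting. For the direction \emph{spanning $\Rightarrow$ invertibility}, the paper appeals to Lemma~\ref{lemma:low_rank_spans} to claim $\operatorname{image}(Z) = \operatorname{span}\{y_i\} = \mathbb{R}^d$, concluding surjectivity and hence bijectivity. But that lemma only delivers the \emph{equality} $\operatorname{image}(A)=V$ under the hypothesis that the $v_i$'s and $w_i$'s are independent, which fails when $n>d$ and the $y_i$'s span without being linearly independent; strictly read, the paper's citation covers only the inclusion $\operatorname{image}(Z)\subset\operatorname{span}\{y_i\}$. Your positive-semi-definiteness argument sidesteps this: $x^T Z x = \sum_i \langle y_i, x\rangle^2 \ge 0$, so $Zx = 0$ forces $x \perp y_i$ for all $i$, hence $x \in \operatorname{span}\{y_i\}^\perp = \{0\}$, giving trivial kernel and therefore invertibility of a square matrix. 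This is self-contained and fully rigorous without any independence assumption. For the other direction (\emph{invertibility $\Rightarrow$ spanning}), your argument via $\mathbb{R}^d = \operatorname{image}(Z) \subset \operatorname{span}\{y_i\}$ and the paper's constructive $v = Za = \sum_i (y_i^T a)\, y_i$ with $a = Z^{-1}v$ are essentially the same observation packaged differently. On balance, your route is slightly cleaner and arguably patches a small gap in the paper's appeal to Lemma~\ref{lemma:low_rank_spans}.
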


\begin{proof}
Suppose that the $y_i$'s span the whole vector space $\mathbb R^d$. By Lemma \ref{lemma:low_rank_spans}, we have that $\operatorname{image}(Z) = \operatorname{span}(y_1, \dots, y_n) = \mathbb R^n$, which means that $Z$ is subjective. An operator $Z:\mathbb R^d\rightarrow \mathbb R^d$ defined on the same space can be subjective if and only if it is bijective, i.e., invertible.  

To prove the converse, suppose that $Z$ is invertible. We need to prove that any vector $v \in\mathbb R^d$ can be written as a linear combination of the $y_i$'s. Since $Z$ is invertible, let us denote by $a = Z^{-1}v$ the inverse image of $v$ by $Z$. Now we have that
\begin{equation}
v  = Za 
 =  \left(\sum_{i=1}^n y_iy_i^T\right) a 
 =  \sum_{i=1}^n \alpha_i y_i,
\end{equation}
where $\alpha_i = y_i^Ta$, which finishes proving the converse.

\end{proof}

\begin{lemma}
\label{lemma:invertibility_when_basis}
Let $y_1,\dots, y_d\in \mathbb R^d$ be a basis, and consider the linear map
\begin{equation}
Z = \sum_{i=1}^n y_i y_i^T.
\end{equation}
Then $Z$ is invertible with inverse given by
\begin{equation}
Z^{-1} = (Y^{-1})^T Y^{-1} = \sum_{j=1}^n \omega_j \omega_j^T,
\end{equation}
where $Y$ is the matrix with columns $y_1,\dots, y_n$, and the $\omega_j$
s are the columns of the matrix $(Y^{-1})^T$.
\end{lemma}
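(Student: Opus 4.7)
The plan is to reduce the lemma to an elementary matrix factorization that sidesteps any direct computation with the sum. First I would observe that if $Y$ denotes the $d\times d$ matrix whose columns are $y_1,\dots, y_d$, then by the standard identity
\begin{equation}
YY^T \;=\; \sum_{i=1}^d (Ye_i)(Ye_i)^T \;=\; \sum_{i=1}^d y_iy_i^T,
\end{equation}
one immediately has $Z = YY^T$. This single observation is the heart of the proof; everything else is bookkeeping.

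Next, since $y_1,\dots,y_d$ is a basis of $\mathbb R^d$, the matrix $Y$ is invertible, so both $Y$ and $Y^T$ are invertible, and therefore so is $Z=YY^T$. Inverting the factorization gives
\begin{equation}
Z^{-1} \;=\; (YY^T)^{-1} \;=\; (Y^T)^{-1} Y^{-1} \;=\; (Y^{-1})^T Y^{-1},
\end{equation}
which establishes the first claimed form. Note that invertibility of $Z$ is also guaranteed independently by Lemma~\ref{lemma:invertibility_when_independence}, since a basis spans $\mathbb R^d$.

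Finally, to obtain the outer-product form, I would set $\Omega := (Y^{-1})^T$ and denote its columns by $\omega_1,\dots,\omega_d$. Applying the same identity as in the first step to $\Omega$ in place of $Y$ yields $\Omega\Omega^T = \sum_{j=1}^d \omega_j\omega_j^T$, and combining this with $Z^{-1} = \Omega\Omega^T$ gives the stated formula. There is no substantive obstacle in this proof; the only subtlety worth flagging is the identification of the $\omega_j$ as the dual basis of the $y_i$, which follows from the defining relation $(Y^{-1})Y = I$: its $(j,i)$ entry reads $\omega_j^T y_i = \delta_{ji}$, exactly as claimed in the remark preceding the lemma.
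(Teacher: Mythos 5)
Your proof is correct and follows essentially the same route as the paper's: identify $Z = YY^T$, note $Y$ is invertible because the $y_i$ form a basis, invert the factorization to get $Z^{-1} = (Y^{-1})^T Y^{-1}$, and read off the outer-product form by setting $\Omega = (Y^{-1})^T$. The explicit remark on the dual-basis relation $\omega_j^T y_i = \delta_{ji}$ is a nice addition that the paper states only in the surrounding discussion rather than inside the proof itself.
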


\begin{proof}
If $y_1, \dots, y_d$ is a basis, it means that these vectors are independent. This means that the derterminant of the matrix $Y$ with the $y_i$'s as columns is non zero: $\det Y \neq 0$. This means that it is invertible, and so is its transpose. Now, since $Z = YY^T$, it is easy to verify that
\begin{equation}
(Y^T)^{-1} Y^{-1} Z = Z(Y^T)^{-1} Y^{-1} = I.
\end{equation}
Now that we have established that $Z^{-1} = (Y^{-1})^T Y^{-1}$, setting $\Omega = (Y^{-1})^T$, we see that $Z^{-1} = \Omega \Omega^T$, concluding that $Z^{-1} = \sum_{i=1}^n \omega_i\omega_i^T$ with $\omega_i$ being the columns of $\Omega$, and hence the rows of $Y^{-1}$ by definition.
\end{proof}

\begin{lemma}
\label{lemma:invertibility_when_orgonal_basis}
Let $y_1,\dots, y_d\in \mathbb R^d$ be an orthonormal basis, and consider the linear map
\begin{equation}
Z = \sum_{i=1}^n y_i y_i^T.
\end{equation}
Then $Z$ and its inverse $Z^{-1}$ are both the identity matrix.
\end{lemma}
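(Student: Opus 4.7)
The statement is essentially a specialization of Lemma~\ref{lemma:invertibility_when_basis} to the orthonormal setting, so the cleanest plan is to reduce to that result (or alternatively, verify $Zv = v$ directly on an arbitrary vector). My preferred approach is the direct route, since orthonormality makes the coordinate expansion trivial.

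First, I would observe that since $y_1, \dots, y_d$ is an orthonormal basis of $\R^d$, every vector $v \in \R^d$ admits the expansion $v = \sum_{j=1}^d \langle y_j, v\rangle\, y_j$. I would then compute $Zv$ directly:
\begin{equation*}
Zv = \sum_{i=1}^d y_i y_i^T v = \sum_{i=1}^d \langle y_i, v\rangle\, y_i = v.
\end{equation*}
Since this holds for every $v\in\R^d$, we conclude $Z = I$, and consequently $Z^{-1} = I$.

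As a sanity check, I would also note that this is consistent with Lemma~\ref{lemma:invertibility_when_basis}: forming the matrix $Y = [y_1\ \cdots\ y_d]$ from the orthonormal columns gives $Y^T Y = I$, hence $Y$ is orthogonal and $Y^{-1} = Y^T$. Then $Z = YY^T = I$ and $Z^{-1} = (Y^{-1})^T Y^{-1} = YY^T = I$, matching the direct computation.

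There is no real obstacle here; the only subtlety worth flagging is that the statement implicitly assumes $n = d$ (an orthonormal basis of $\R^d$ necessarily has exactly $d$ elements), which is why $Z$ is guaranteed to be invertible in the first place by Lemma~\ref{lemma:invertibility_when_independence}. The proof is otherwise a one-line consequence of orthonormal expansion.
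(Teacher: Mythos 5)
Your proof is correct and takes essentially the same approach as the paper: both verify that $Z$ acts as the identity by exploiting orthonormality, the paper by checking $Zy_i = y_i$ on each basis vector and invoking that identity-in-one-basis means identity everywhere, you by computing $Zv = v$ on an arbitrary $v$ via orthonormal expansion. The consistency check against Lemma~\ref{lemma:invertibility_when_basis} is a nice extra, but the core argument is the same one-liner.
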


\begin{proof}
Since $Zy_i = \|y_i\|^2 y_i = y_i$ for all basis vectors, we see that $Z$ is the identity matrix in this basis. But if a matrix is the identity in one basis, it's the identity in any basis. 
\end{proof}

\subsection{Spherical random distribution}

\begin{lemma}\label{lemma:orthogonalitly_and_identity}
If a matrix $P$ is preserved by the group of orthogonal transformations, i.e., if $P = Q^T P Q$ for each orthogonal transformation $Q$, then the matrix is a multiple of the identity matrix, i.e.,  $P = c I$.
\end{lemma}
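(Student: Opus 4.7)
The plan is to rewrite the invariance condition as a commutation relation and then exploit two elementary subfamilies of orthogonal matrices — diagonal sign flips and coordinate permutations — to pin down the structure of $P$. Since $Q^T = Q^{-1}$ for orthogonal $Q$, the hypothesis $P = Q^T P Q$ is equivalent to $PQ = QP$ for every orthogonal $Q$, so what we really have is that $P$ commutes with the entire orthogonal group.

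First I would test this commutation relation against diagonal sign-flip matrices. For each coordinate $k$, consider $Q_k = \operatorname{diag}(1,\ldots,1,-1,1,\ldots,1)$ with the $-1$ in the $k$-th slot; these are orthogonal. Writing the commutation $PQ_k = Q_k P$ entrywise immediately forces $P_{ij} = 0$ whenever exactly one of the indices $i, j$ equals $k$, because the two sides pick up opposite signs. Letting $k$ range over all coordinates kills every off-diagonal entry, so $P$ must be diagonal.

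Second, I would test the commutation relation against the permutation matrices $\Pi_{ij}$ that swap basis vectors $e_i \leftrightarrow e_j$, which are also orthogonal. Conjugation by $\Pi_{ij}$ interchanges the diagonal entries $P_{ii}$ and $P_{jj}$, so the commutation relation forces $P_{ii} = P_{jj}$ for every pair $i \neq j$. All diagonal entries therefore share a common value $c$, which yields $P = cI$.

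I do not expect a real obstacle here: this is a Schur-lemma-style argument reflecting the irreducibility of the defining representation of $O(d)$ on $\mathbb{R}^d$, and the two families of test matrices (sign flips and transpositions) are small enough to verify the entrywise effect directly. The only mild care needed is to note that sign flips already suffice to diagonalize $P$ before invoking permutations, rather than trying to use a single clever $Q$ to finish in one stroke.
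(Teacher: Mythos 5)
Your proposal is correct and takes essentially the same route as the paper's proof: both use coordinate sign flips to annihilate the off-diagonal entries of $P$ and then transposition matrices (coordinate swaps) to equalize the diagonal entries. The only cosmetic difference is that you first recast $P = Q^T P Q$ as the commutation relation $PQ = QP$ and mention the Schur-lemma interpretation, but the underlying entrywise computation is identical.
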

\begin{proof}
Let denote us by $P_{ij}$ the entries of the matrix $P$. By using suitable orthonormal transformations as well as the relation $P = Q^TPQ$ we will first show that the off-diagonal elements of $P_{ij}$ with $i\neq j$ must be zero. Then using a different orthonormal transformation, we will see that the diagonal elements need all to be one in order to satisfy $P = Q^TPQ$.

Let us start with the off-diagonal elements. Consider the orthonormal transformation $Q$ that sends the basis vector $e_i$ into the basis vector $-e_i$ and leaves all other basis vectors unchanged (reflection in the $e_i$ direction). In coordinates, $Q$ is the matrix with $Q_{ll} = 1$ if $l\neq i$, $Q_{ii} = -1$ and all other entries set to zero. In this case, $P = Q^TPQ$ implies that 
\begin{eqnarray}
P_{ij} 
& = & \sum_{u,v}(Q^T)_{iu} P_{uv} Q_{vj},\\
& = & \sum_{u,v}Q_{ui} P_{uv} Q_{vj}, \\
& = &  Q_{ii} P_{ij} Q_{jj}, \\
& = & -  P_{ij},
\end{eqnarray}
and hence $P_{ij} = 0$ since zero is the only number which at the same time positive and negative. We can repeat this argument for any off-diagonal element. 

Let us now take care of the diagonal elements knowing that the off-diagonal elements of $P$ are zero. This means that $P = \operatorname{diag}(c_1, \dots, c_d)$. Consider now the orthonormal operator $Q$ that permutes the basis vector $e_i$ with the basis vector $e_j$ and leaves all other basis vectors unchanged (rotation in the $ij$-plane). In coordinates, $Q$ is the matrix such that $Q_{ll}=1$ if $l\neq i,j$, $Q_{ij} = Q_{ji} = 1$ and zero otherwise. Now with this transformation $P = Q^TPQ$ implies in coordinates that
\begin{eqnarray}
P_{ii} 
& = & \sum_{u,v}(Q^T)_{iu} P_{uv} Q_{vi},\\
& = & \sum_{u,v}Q_{ui} P_{uv} Q_{vi}, \\
& = & Q_{ji} P_{jj} Q_{ij}, \\
& = &   P_{jj},
\end{eqnarray}
and in other words that $c_i = c_j$. Since we can repeat this argument for any pair of basis vectors, we obtain that $P = c I$ where $c = c_1 = \cdots = c_d$, which completes the proof.
\end{proof}

\begin{definition}
We say that a random variable $X$ has a \emph{spherical distributions} when it is invariant under the orthogonal group; this means that $X$ and its transformation by an element of the orthogonal group $QX$ are \emph{identically} distributed for any orthogonal transformation $Q$.
\end{definition}

\begin{lemma}
\label{lemma:invertibility_when_spherical_distribution}
Consider the rank 1 operator $T_y = y y^T$, where $y$ is a spherically distributed random variable on $\mathbb R^d$ with covariance matrix $\sigma^2 I$. Then we have that the expectation of $T_y$ is a multiple of the identity: i.e.,
\begin{equation}
\mathbb E(T_y) = \sigma^2 I.
\end{equation}
In particular, its empirical mean can approximate $\sigma^2 I$ arbitrarily close by increasing the number $n$ of samples $y_i$ of the random variable $y$:
\begin{equation}
\frac 1n \sum_{i=1}^n y_iy_i^T \simeq \sigma^2 I.
\end{equation}
\end{lemma}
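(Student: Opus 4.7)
The plan is to leverage the spherical symmetry of $y$ together with the already-established Lemma \ref{lemma:orthogonalitly_and_identity} to force $P := \mathbb E(yy^T) = \mathbb E(T_y)$ to be a scalar multiple of the identity, then fix that scalar via a trace computation, and finally pass to the empirical mean using the strong law of large numbers.

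First, I would verify that $\mathbb E(y)=0$: sphericity applied to the orthogonal transformation $Q=-I$ gives $-y\stackrel{d}{=}y$, so $\mathbb E(y) = -\mathbb E(y) = 0$, and consequently the covariance matrix of $y$ equals $P$. The core step is then a symmetry argument: for any orthogonal $Q$, $Qy\stackrel{d}{=}y$ yields
$$P \;=\; \mathbb E\!\left[(Qy)(Qy)^T\right] \;=\; Q\,\mathbb E(yy^T)\,Q^T \;=\; QPQ^T,$$
and replacing $Q$ by its (likewise orthogonal) transpose rewrites this as $P = Q^T P Q$ for every orthogonal $Q$ --- exactly the hypothesis of Lemma \ref{lemma:orthogonalitly_and_identity}. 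That lemma forces $P = cI$ for some scalar $c$, which I would identify by taking traces: $cd = \operatorname{tr}(P) = \mathbb E(\operatorname{tr}(yy^T)) = \mathbb E(\|y\|^2)$, and the covariance assumption yields $\mathbb E(\|y\|^2) = \operatorname{tr}(\sigma^2 I) = \sigma^2 d$, so $c = \sigma^2$.

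For the empirical claim, the random matrices $y_iy_i^T$ are i.i.d., integrable (the covariance assumption ensures $\mathbb E\|y\|^2 = d\sigma^2 < \infty$), and have common mean $\sigma^2 I$, so the entrywise strong law of large numbers delivers $\tfrac1n\sum_{i=1}^n y_iy_i^T \to \sigma^2 I$ almost surely as $n\to\infty$. The main conceptual move is the symmetry argument --- in particular, arranging the orthogonal invariance in exactly the form $P = Q^T P Q$ so that Lemma \ref{lemma:orthogonalitly_and_identity} applies directly (which is automatic since the orthogonal group is closed under transposition); the remaining work is routine (a trace computation and a standard LLN).
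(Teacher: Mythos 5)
Your proof is correct and follows essentially the same route as the paper: exploit orthogonal invariance to show $\mathbb{E}(yy^T) = Q^T\,\mathbb{E}(yy^T)\,Q$ for all orthogonal $Q$, invoke Lemma~\ref{lemma:orthogonalitly_and_identity} to conclude it is a scalar multiple of the identity, and pin down the scalar via a trace computation. Your version is slightly tidier in two small respects — you explicitly derive $\mathbb{E}(y)=0$ from the reflection $Q=-I$ (the paper just asserts it earlier), and you justify the empirical-mean claim by citing the strong law of large numbers rather than leaving it as an informal remark — but the key ideas and the use of the supporting lemma are identical.
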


\begin{proof}
Let $Q$ be an orthogonal transformation. Since the distribution of the random variable $y$ is spherical, it means that $y$ and $z = Qy$ are identically distributed. Since both variables have the same distribution, then their associated rank 1 projectors $T_y = y y^T$ and $T_z = z z^T$ are also identically distributed. This implies in particular that they have identical means:
\begin{equation}
T:= \mathbb E(T_y) = \mathbb E(T_z).
\end{equation}
On the other hand, if we compute the expectation of $T_z$ directly we obtain
\begin{eqnarray}
\mathbb E(T_z) 
& = & \mathbb E\left(z^T z\right) \\
& = &  \mathbb E\left( Q^T y^Ty Q\right) \\
& = & Q^T \mathbb E (T_y) Q,
\end{eqnarray}
which implies that $T = Q^T T Q$ for all orthogonal matrix $Q$ since $T = \mathbb E(T_y) = E(T_z)$. Using now Lemma \ref{lemma:orthogonalitly_and_identity}, we conclude that $T = c I$ for a constant $c$. To determine the constant $c$ above, we compute the trace of $T_y$ using the cyclical property of the trace (i.e. $\operatorname{trace}(AB) = \operatorname{trace}(BA)$):
\begin{equation}
    \operatorname{trace}(T_y) = 
    \operatorname{trace}(y^T y)  = 
    \operatorname{trace}(y y^T) = 
    \operatorname{trace}(\|y\|^2) = \|y\|^2.
\end{equation}
Now taking the expectation on both sides of the equation $\operatorname{trace}(T_y) = \|y\|^2$ and using that the expectation of the trace is the trace of the expectation, we obtain that 
\begin{eqnarray}
\mathbb E(\|y\|^2)
& = & \operatorname{trace}(I c)  \\
& = & c \operatorname{trace}(I)\\
& = & c d.
\end{eqnarray}
Hence, we have that $c = \mathbb{E}(\|y\|^2)/d$ where $d$ is the dimension of the space. Now, we can easily evaluate the expectation of $\|y\|^2$ using the expectation formula for a quadratic form: $\mathbb E(y^TAy) = E(y)^T A E(y) + \operatorname{trace}(\Sigma A)$, where $\Sigma$ is the covariance matrix for $y$. In our case, $\mathbb E(y) = 0$ and $\Sigma = \sigma^2 I$ since the distribution is spherical. Therefore
\begin{equation}
\mathbb E(\|y\|^2) = \operatorname{trace}(\sigma^2 I) = \sigma^2 d,
\end{equation}
which gives us $c = \sigma^2$.
\end{proof}

\section{Related Work}
\label{appendix:original_related_work}

Our work provides a unified theoretical framework that explains and connects two prominent families of empirical methods for controlling large language models at inference time: activation steering with vectors and direct model editing with matrices. Recent literature suggests these methods operate on a "parametric continuum," where information transitions from volatile prompts to transient activation vectors, and finally to rigid model weights.

\paragraph{Activation Steering with Vectors.} One of the most popular methods for guiding a model's behavior is \emph{activation steering}, which involves adding a "steering vector" to the residual stream activations within each transformer block \citep{Subramani2022ExtractingLS}. While these vectors can be learned, they are commonly derived using a simple and effective heuristic: computing the difference between the model's average activations on a set of ``positive'' and ``negative'' prompts \citep{rimsky2024steering}. This core idea of using contrastive or averaged activations has been refined and extended in various ways. For instance, some methods use linear probes on the space of contrastive activations to find steering vectors \citep{li2023inference}, while others extract them from the principal component of the contrastive embedding differences \citep{zou2023representation}. 

The most similar vectors to our proposed thought vectors are those computed by simple averaging over contrastive samples \citep{turner2025steering,chen2025persona}, a method shown to be highly reliable in recent benchmarks \citep{unifiedSteeringMethods}. This concept has also been generalized to capture entire tasks, with "function vectors" \citep{todd2024function} and "task vectors" derived from contrastive prompts \citep{hendel2023incontext}. Building on this, \citep{chen2025persona} introduced "persona vectors" to monitor and control high-level character traits, demonstrating that complex behavioral personas can be isolated as single directions. Similarly, \citet{Liu2024IncontextV} proposed "in-context vectors" to condense demonstrations into a latent steering vector. More recently, \citet{saglam2025learning} demonstrated that these task vectors are modality-independent, confirming they represent functional encodings rather than lexical patterns. Additionally, \citet{li2025implicit} introduced "implicit in-context learning", proving that demonstration examples can be compressed into a single "context vector" that shifts the model's behavior without processing tokens, validating the transmutation hypothesis.

While heuristic vector addition is effective, \citet{chalnev2024improving} utilizes Sparse Autoencoders (SAEs) to target specific features for "surgical" steering. This approach builds directly on the foundational work of \citet{bricken2023monosemanticity} and \citet{templeton2024scaling}, who demonstrated that SAEs can decompose dense activations into interpretable features which, when clamped, robustly steer model behavior (e.g., the "Golden Gate Claude" experiment). Despite their success, benchmarks show that the performance of vector-based steering methods is not always fully reliable \citep{tan2024analysing, Pres2024TowardsRE}, suggesting that vector addition alone may be an incomplete representation of an instruction's full effect \citep{Brumley2024ComparingBA, yang2025taskvectorsincontextlearning}.

\paragraph{Model Editing with Matrices.}
A parallel line of research focuses on \emph{model editing} through direct modification of a model's weight matrices. These techniques often target the feed-forward layers (FFNs), which have been hypothesized to function as key-value memories storing factual information \citep{geva2021transformerfeedforwardlayerskeyvalue}. \citet{zhang2025distributional} refine this view, identifying FFNs as the locus of static distributional associations, while attention layers handle dynamic reasoning.
Rather than adding to activations, methods like ROME \citep{meng2022locating} and MEND \citep{mitchell2022fast} apply low-rank updates to the weight matrices to permanently alter knowledge. The formal structure of these matrix edits bears a strong resemblance to our proposed thought matrices. This technique extends to safety controls \citep{wei2024assessing} and toxicity reduction \citep{uppaal2024model}. 

Recent work explicitly links steering concepts to weight editing. \citet{gurarieh2025precise} introduced PISCES, which uses SAEs to identify concept directions (similar to steering) and then ablates them directly in the FFN parameters. Similarly, \citet{ruzzetti2025private} developed Private Memorization Editing (PME), targeting FFNs to unlearn memorized PII, further confirming that semantic prompts map to specific physical locations in the weights. Finally, \citet{shafran2025decomposing} propose decomposing MLP activations to map these "transmuted" features back to specific neuron combinations.

\paragraph{Bridging Empirical Methods with Theory.}
A theoretical explanation for why these interventions work has been missing. To our knowledge, no prior work provides a first-principles strategy to condense information from a generic prompt into a reusable weight update for standard transformers. While \citet{chen2024exact} demonstrated exact conversion of ICL to weights for linearized attention, their approach requires architectural modifications.

Our work fills this gap by building on the insights of \citep{dherin2025learning}, which extends the "ICL as implicit gradient descent" framework of \citet{vonOswald2023TransformersLI}, \citet{dai2023why}, and \citet{akyrek2023what}. While initial work focused on linear regression, \citet{cheng2024transformers} proved that Transformers implement \textit{functional} gradient descent to learn non-linear functions.
We prove that the effect of a prompt on a standard transformer can be replicated by vector and matrix updates. This aligns with the "task vector" emergence studies of \citet{yang2025task}, but provides the explicit mapping mechanism. Our contribution is to show how these transient, token-specific updates can be aggregated into token-independent \emph{thought patches}, transmuting prompts into durable weights.

\end{document}